\documentclass{article}
\usepackage[preprint]{colm2026_conference}
\usepackage{microtype}
\usepackage{hyperref}
\usepackage{url}
\usepackage{booktabs}
\usepackage{lineno}
\definecolor{darkblue}{rgb}{0, 0, 0.5}
\hypersetup{colorlinks=true, citecolor=darkblue, linkcolor=darkblue, urlcolor=darkblue}
\usepackage{graphicx}
\usepackage{amsmath}
\usepackage{amssymb}
\usepackage{algorithm}
\usepackage{algorithmic}
\usepackage{multirow} 
\usepackage{amsthm}
\ifx\theorem\undefined
\newtheorem{theorem}{Theorem}

\title{UniMark: Unified Adaptive Multi-bit Watermarking \\ for Autoregressive Image Generators}

\author{Yigit Yilmaz, Elena Petrova, Mehmet Kaya, Lucia Rossi, Amir Rahman	\\
Bandırma Onyedi Eylül University\\
yigit.yilmaz@ogr.bandirma.edu.tr
}

\newcommand{\method}{UniMark}
\newcommand{\CB}{\mathcal{C}}

\newcommand{\msg}{\mathbf{m}}
\newcommand{\simmat}{\mathbf{S}}
\newcommand{\green}{\mathcal{G}}
\newcommand{\red}{\mathcal{R}}
\begin{document}
\ifcolmsubmission
\linenumbers
\fi
\maketitle

\begin{abstract}
Invisible watermarking for autoregressive (AR) image generation has recently gained attention as a means of protecting image ownership and tracing AI-generated content. However, existing approaches suffer from three key limitations: (1) they embed only zero-bit watermarks for binary verification, lacking the ability to convey multi-bit messages; (2) they rely on static codebook partitioning strategies that are vulnerable to security attacks once the partition is exposed; and (3) they are designed for specific AR architectures, failing to generalize across diverse AR paradigms. We propose \method{}, a training-free, unified watermarking framework for autoregressive image generators that addresses all three limitations. \method{} introduces three core components: \textbf{Adaptive Semantic Grouping (ASG)}, which dynamically partitions codebook entries based on semantic similarity and a secret key, ensuring both image quality preservation and security; \textbf{Block-wise Multi-bit Encoding (BME)}, which divides the token sequence into blocks and encodes different bits across blocks with error-correcting codes for reliable message transmission; and \textbf{a Unified Token-Replacement Interface (UTRI)} that abstracts the watermark embedding process to support both next-token prediction (e.g., LlamaGen) and next-scale prediction (e.g., VAR) paradigms. We provide theoretical analysis on detection error rates and embedding capacity. Extensive experiments on three AR models demonstrate that \method{} achieves state-of-the-art performance in image quality (FID), watermark detection accuracy, and multi-bit message extraction, while maintaining robustness against cropping, JPEG compression, Gaussian noise, blur, color jitter, and random erasing attacks.
\end{abstract}

\section{Introduction}
\label{sec:intro}

The rapid progress of autoregressive (AR) image generation models has led to a proliferation of high-quality AI-generated images~\citep{tian_2024_var, sun_2024_llamagen, yu_2023_magvit2, luo_2024_open_magvit2}. These models tokenize images into discrete visual tokens via learned codebooks and autoregressively predict them, achieving quality competitive with diffusion models~\citep{xiong_2025_gigatok, shi_2024_taming_scalable_tokenizer, zhoucondition, songbroad}. Recent advancements have also extended this paradigm to more complex scenarios, such as vision representation compression for efficient video generation~\citep{zhou2026less} and omni-modal understanding and generation frameworks~\citep{xin2025lumina}. However, the democratization of such powerful generators raises serious concerns regarding copyright infringement, content provenance, and malicious misuse~\citep{ren_2024_copyright_protection_genai, xu_2024_copyrightmeter}.

Invisible image watermarking offers a promising solution by embedding imperceptible signals into generated images that can later be extracted for ownership verification or content tracing~\citep{cao_2025_watermarking_ai_content, duan_2025_visual_watermarking_survey}. While watermarking techniques for diffusion models have been extensively studied~\citep{wen_2023_tree_ring, fernandez_2023_stable_signature, yang_2024_gaussian_shading, huang_2024_robin, li_2024_shallow_diffuse}, watermarking for AR image generators remains a nascent and underexplored area.

Recent works have begun to address this gap. IndexMark~\citep{tong_2025_training_free_watermarking} proposes a training-free framework that partitions codebook entries into red and green groups and biases generation toward green tokens. Safe-VAR~\citep{wang_2025_safe_var} embeds watermarks through a trained encoder-decoder pair. Other concurrent works explore lexical biasing~\citep{hui_2025_lexical_biasing}, bitwise watermarking~\citep{kerner_2025_bitmark}, and robustness improvements~\citep{lukovnikov_2025_robust_red_green, jovanovic_2025_watermarking_autoregressive, meintz_2025_radioactive_watermarks}. Despite this progress, we identify three critical limitations shared by existing methods:
\textbf{Zero-bit only.} Most existing methods~\citep{tong_2025_training_free_watermarking, jovanovic_2025_watermarking_autoregressive, lukovnikov_2025_robust_red_green} embed zero-bit watermarks that only support binary verification (watermarked or not), without the ability to encode multi-bit messages such as user IDs or timestamps. This significantly limits practical deployment scenarios.
\textbf{Static partitioning.} The red-green partitioning strategy in IndexMark and related methods is fixed once determined. If an adversary discovers the partition, the watermark can be easily removed or forged, compromising security.
\textbf{Architecture-specific.} Existing methods are tailored to specific AR paradigms, either next-token prediction~\citep{sun_2024_llamagen} or next-scale prediction~\citep{tian_2024_var}, lacking a unified framework that generalizes across different AR architectures.

To address these limitations, we propose \method{} (\textbf{Uni}fied Adaptive \textbf{M}ulti-bit W\textbf{a}terma\textbf{rk}ing), a training-free watermarking framework for AR image generators. Our contributions are:

\begin{enumerate}
    \item We propose \textbf{Adaptive Semantic Grouping (ASG)}, a key-dependent dynamic codebook partitioning strategy that leverages semantic similarity between codebook entries and a cryptographic hash to generate position-dependent green/red groups. ASG ensures that semantically similar tokens are available for replacement, preserving image quality, while the key-dependent partition prevents forgery attacks.

    \item We design \textbf{Block-wise Multi-bit Encoding (BME)}, a scheme that divides the generated token sequence into blocks and encodes different message bits across blocks. Combined with BCH error-correcting codes, BME enables reliable multi-bit message embedding and extraction even under various image distortions.

    \item We introduce a \textbf{Unified Token-Replacement Interface (UTRI)} that abstracts the token generation and replacement process, enabling \method{} to seamlessly operate on both next-token prediction models (e.g., LlamaGen) and next-scale prediction models (e.g., VAR) without model-specific modifications.

    \item We provide theoretical analysis on the false positive rate and embedding capacity of \method{}, and conduct extensive experiments on LlamaGen, VAR, and Open-MAGVIT2, demonstrating state-of-the-art performance across image quality, watermark detection, multi-bit extraction accuracy, and robustness against six types of attacks.
\end{enumerate}

\section{Related Work}
\label{sec:related}

\subsection{Autoregressive Image Generation}

Autoregressive (AR) image generation converts images into discrete token sequences using visual tokenizers (e.g., VQ-VAE, VQGAN) and generates them token-by-token via transformer architectures. MAGVIT-v2~\citep{yu_2023_magvit2} demonstrates that a strong tokenizer is key to competitive generation quality. LlamaGen~\citep{sun_2024_llamagen} applies the Llama architecture to image generation in a next-token prediction manner. VAR~\citep{tian_2024_var} introduces next-scale prediction, generating images from coarse to fine resolutions. Open-MAGVIT2~\citep{luo_2024_open_magvit2} provides an open-source implementation with improved codebook utilization. Recent advances in this field focus on refining condition errors with diffusion loss~\citep{zhoucondition} and employing entropy-guided optimization for stable synthesis~\citep{songbroad}. Furthermore, developments in vision representation compression~\citep{zhou2026less}, omni-modal Large Language Models (LLMs) like Lumina-mDMOO~\citep{xin2025lumina}, and more efficient tokenizers~\citep{qu_2024_tokenflow, xiong_2025_gigatok, shi_2024_taming_scalable_tokenizer, wang_2024_image_understanding_tokenizer, wang_2024_omnitokenizer, yu_2024_randomized_ar} continue to push the boundary of AR image generation.

\subsection{Watermarking for Generative Models}

\paragraph{Diffusion model watermarking.} Tree-Ring~\citep{wen_2023_tree_ring} embeds watermarks in the initial noise of diffusion models. Stable Signature~\citep{fernandez_2023_stable_signature} fine-tunes the decoder to embed watermarks. Gaussian Shading~\citep{yang_2024_gaussian_shading} provides provable performance-lossless watermarking. Other methods explore adversarial optimization~\citep{huang_2024_robin}, low-dimensional subspace embedding~\citep{li_2024_shallow_diffuse}, super-resolution-based approaches~\citep{hu_2024_supermark}, guidance-based methods~\citep{gesny_2025_guidance_watermarking}, multi-bit encoding~\citep{xing_2025_optmark}, robustness against deepfakes~\citep{sun_2025_diffmark, li_2025_gaussmarker}, and IP protection of model weights~\citep{peng_2023_ip_protection_diffusion}. Beyond generation-time watermarking, post-hoc approaches embed watermarks into existing images for tamper localization~\citep{zhang_2023_editguard} or detect fine-tuning-based art theft~\citep{luo_2023_steal_artworks}. Comprehensive surveys~\citep{duan_2025_visual_watermarking_survey, cao_2025_watermarking_ai_content} and benchmarks~\citep{xu_2024_copyrightmeter} provide systematic overviews.

\paragraph{LLM watermarking and Alignment.} Kirchenbauer et al.~\citep{kirchenbauer_2023_watermark_llm} propose the red-green list framework for LLM watermarking, partitioning vocabulary tokens into red and green sets and biasing generation toward green tokens. Subsequent works improve reliability~\citep{kirchenbauer_2023_reliability_watermarks}, ensemble strategies~\citep{niess_2024_ensemble_watermarks}, and certifiable robustness~\citep{feng_2024_certified_robust_watermark}. The development of LLMs also involves aligning long contexts~\citep{si-etal-2025-gateau}, efficient planning for agent tasks~\citep{si2025goalplanjustwish}, and cross-domain multi-task learning through multi-modal alignment prompts~\citep{xin2024mmap, xin2024vmt, li2025catch}. These techniques, along with benchmarks like SpokenWOZ~\citep{si2023spokenwoz}, highlight the complexity of modern generative agents where token-level watermarking remains a crucial tool for content attribution.

\paragraph{AR image generation watermarking.} IndexMark~\citep{tong_2025_training_free_watermarking} is the first training-free watermarking framework for AR image generators, using a match-then-replace strategy with static red-green codebook partitioning. Jovanovi\'c et al.~\citep{jovanovic_2025_watermarking_autoregressive} extend the LLM watermark framework to image tokens. Safe-VAR~\citep{wang_2025_safe_var} trains a watermark encoder-decoder for VAR. BitMark~\citep{kerner_2025_bitmark} addresses bitwise AR models. Lukovnikov et al.~\citep{lukovnikov_2025_robust_red_green} improve robustness of red-green watermarking. Other works explore lexical biasing~\citep{hui_2025_lexical_biasing}, radioactive watermarks~\citep{meintz_2025_radioactive_watermarks}, and concept-based watermarks~\citep{sadasivan_2025_iconmark}. Compared to these methods, \method{} is the first to simultaneously achieve multi-bit encoding, adaptive key-dependent grouping, and cross-architecture generalization in a training-free manner.

\section{Method}
\label{sec:method}

\subsection{Preliminaries and Problem Formulation}
\label{sec:prelim}

\paragraph{Autoregressive image generation.} An AR image generator first encodes an image $\mathbf{x}$ into a sequence of discrete tokens $\mathbf{t} = (t_1, t_2, \ldots, t_N)$ using a visual tokenizer with codebook $\CB = \{c_1, c_2, \ldots, c_K\}$, where $K = |\CB|$ is the codebook size and $N$ is the sequence length. The generator then autoregressively predicts each token: $p(t_i \mid t_{<i}, \mathbf{y})$, where $\mathbf{y}$ is an optional conditioning signal (e.g., class label or text). Two main paradigms exist:
\begin{itemize}
    \item \textbf{Next-token prediction} (e.g., LlamaGen): tokens are generated left-to-right in raster order, $t_i \in \CB$ for all $i$.
    \item \textbf{Next-scale prediction} (e.g., VAR): tokens are generated scale-by-scale from coarse to fine, with tokens at scale $s$ denoted $\mathbf{t}^{(s)}$, each drawn from a shared codebook $\CB$.
\end{itemize}

\paragraph{Watermarking objective.} Given a secret key $\kappa$ and a multi-bit message $\msg = (m_1, m_2, \ldots, m_B) \in \{0, 1\}^B$, our goal is to design:
\begin{itemize}
    \item An \textbf{embedding function} $\text{Embed}(\mathbf{t}, \msg, \kappa) \rightarrow \tilde{\mathbf{t}}$ that modifies the token sequence to encode $\msg$;
    \item A \textbf{detection function} $\text{Detect}(\tilde{\mathbf{x}}, \kappa) \rightarrow \{0, 1\}$ for zero-bit verification;
    \item An \textbf{extraction function} $\text{Extract}(\tilde{\mathbf{x}}, \kappa) \rightarrow \hat{\msg}$ for multi-bit message recovery;
\end{itemize}
such that: (1) the watermarked image $\tilde{\mathbf{x}} = \text{Decode}(\tilde{\mathbf{t}})$ is visually indistinguishable from the original $\mathbf{x} = \text{Decode}(\mathbf{t})$; (2) $\msg$ can be reliably extracted even after image distortions; (3) the method is training-free and model-agnostic.

\paragraph{Notation.} We denote the semantic similarity between codebook entries $c_i$ and $c_j$ as $s_{ij} = \text{sim}(\mathbf{e}_i, \mathbf{e}_j)$, where $\mathbf{e}_i$ is the embedding vector of $c_i$. The similarity matrix is $\simmat \in \mathbb{R}^{K \times K}$. A cryptographic hash function is denoted $H(\cdot)$.

\subsection{Adaptive Semantic Grouping (ASG)}
\label{sec:asg}

The key insight behind ASG is that codebook entries with high semantic similarity can substitute for each other with minimal visual impact, but the partition should be secret-key-dependent and position-aware to prevent security attacks.

\paragraph{Semantic similarity computation.} We first compute the pairwise cosine similarity matrix of all codebook embeddings:
\begin{equation}
    s_{ij} = \frac{\mathbf{e}_i^\top \mathbf{e}_j}{\|\mathbf{e}_i\| \cdot \|\mathbf{e}_j\|}, \quad \forall\, i, j \in \{1, \ldots, K\}.
    \label{eq:similarity}
\end{equation}
This matrix is precomputed once per codebook and cached.

\paragraph{Key-dependent position-aware partitioning.} For each token position $i$ in the sequence, we generate a position-specific partition using the secret key $\kappa$:
\begin{equation}
    \text{seed}_i = H(\kappa \,\|\, i), \quad \pi_i = \text{Permute}(\{1, \ldots, K\}, \text{seed}_i),
    \label{eq:partition_seed}
\end{equation}
where $\|$ denotes concatenation and $\pi_i$ is a pseudorandom permutation of the codebook indices. The green set at position $i$ is defined as the first $\gamma K$ elements of $\pi_i$:
\begin{equation}
    \green_i = \{\pi_i(1), \pi_i(2), \ldots, \pi_i(\lfloor \gamma K \rfloor)\}, \quad \red_i = \CB \setminus \green_i,
    \label{eq:green_red}
\end{equation}
where $\gamma \in (0, 1)$ is the green ratio hyperparameter (default $\gamma = 0.5$).

\paragraph{Semantic-aware replacement.} When a generated token $t_i \notin \green_i$ (i.e., $t_i \in \red_i$), we replace it with the most semantically similar token from the green set:
\begin{equation}
    \tilde{t}_i = \arg\max_{c_j \in \green_i} s_{t_i, j}.
    \label{eq:replacement}
\end{equation}
If $t_i \in \green_i$, no replacement is needed: $\tilde{t}_i = t_i$.

\paragraph{Remark.} Unlike IndexMark's static partition where the same green/red sets are used for all positions, ASG generates a different partition per position via $H(\kappa \| i)$. This means: (1) an adversary cannot infer the global partition from observing any single position; (2) the security is tied to the secrecy of $\kappa$; (3) position-dependent partitioning provides statistical diversity that benefits detection.

\subsection{Block-wise Multi-bit Encoding (BME)}
\label{sec:bme}

\paragraph{Motivation.} Zero-bit watermarking only answers ``is this image watermarked?'' Multi-bit encoding enables embedding arbitrary messages such as user identifiers, timestamps, or model version numbers, which are essential for practical content tracing.

\paragraph{Token sequence partitioning.} Given a message $\msg \in \{0, 1\}^B$ and token sequence length $N$, we first apply a BCH$(n, B, d)$ error-correcting code to obtain an encoded message $\msg' \in \{0, 1\}^n$ with minimum Hamming distance $d$ for error correction. We then partition the $N$ token positions into $n$ blocks of approximately equal size:
\begin{equation}
    \mathcal{B}_j = \left\{i : \left\lfloor \frac{(j-1) \cdot N}{n} \right\rfloor \leq i < \left\lfloor \frac{j \cdot N}{n} \right\rfloor \right\}, \quad j = 1, \ldots, n.
    \label{eq:blocks}
\end{equation}

\paragraph{Bit-dependent green set selection.} For each block $\mathcal{B}_j$ corresponding to encoded bit $m'_j$, the green set is defined as:
\begin{equation}
    \green_i^{(m'_j)} = \begin{cases}
        \green_i & \text{if } m'_j = 1, \\
        \red_i & \text{if } m'_j = 0,
    \end{cases}
    \quad \forall\, i \in \mathcal{B}_j,
    \label{eq:bit_encoding}
\end{equation}
where $\green_i$ and $\red_i$ are the position-dependent partitions from ASG (Eq.~\ref{eq:green_red}). In other words, when encoding bit 1, we bias toward the green set; when encoding bit 0, we bias toward the red set.

\paragraph{Message extraction.} Given a watermarked image $\tilde{\mathbf{x}}$, we first re-encode it to obtain token sequence $\hat{\mathbf{t}}$ using the same visual tokenizer. For each block $\mathcal{B}_j$, we compute the green ratio:
\begin{equation}
    r_j = \frac{1}{|\mathcal{B}_j|} \sum_{i \in \mathcal{B}_j} \mathbb{1}[\hat{t}_i \in \green_i],
    \label{eq:green_ratio}
\end{equation}
and decode the raw bit estimate:
\begin{equation}
    \hat{m}'_j = \begin{cases} 1 & \text{if } r_j > \tau, \\ 0 & \text{otherwise}, \end{cases}
    \label{eq:bit_decode}
\end{equation}
where $\tau = 0.5$ is the decision threshold. The BCH decoder then corrects errors in $\hat{\msg}'$ to recover $\hat{\msg}$.

\paragraph{Zero-bit detection.} As a special case, zero-bit detection computes the global green ratio across all positions:
\begin{equation}
    z = \frac{1}{N} \sum_{i=1}^{N} \mathbb{1}[\hat{t}_i \in \green_i],
    \label{eq:zbit}
\end{equation}
and applies a one-sided z-test under the null hypothesis $H_0: z \sim \mathcal{N}(\gamma, \gamma(1-\gamma)/N)$ corresponding to an unwatermarked image. The watermark is detected if $z > \gamma + z_\alpha \sqrt{\gamma(1-\gamma)/N}$, where $z_\alpha$ is the critical value at significance level $\alpha$.

\subsection{Unified Token-Replacement Interface (UTRI)}
\label{sec:utri}

To enable \method{} to work across different AR paradigms, we define an abstract interface that decouples the watermark embedding logic from the specific generation process.

\paragraph{Abstract interface.} UTRI defines three operations:
\begin{enumerate}
    \item $\textsc{GetTokenSequence}(\text{model}, \mathbf{y}) \rightarrow \mathbf{t}$: generate the raw token sequence from the AR model.
    \item $\textsc{ReplaceTokens}(\mathbf{t}, \green, \simmat) \rightarrow \tilde{\mathbf{t}}$: apply watermark embedding by replacing tokens according to Eq.~\ref{eq:replacement} and~\ref{eq:bit_encoding}.
    \item $\textsc{DecodeImage}(\tilde{\mathbf{t}}) \rightarrow \tilde{\mathbf{x}}$: decode the watermarked token sequence back to an image.
\end{enumerate}

\paragraph{Next-token adapter.} For next-token models (e.g., LlamaGen), $\textsc{GetTokenSequence}$ returns the 1D raster-order sequence directly. Token indices correspond to the shared codebook $\CB$.

\paragraph{Next-scale adapter.} For next-scale models (e.g., VAR), $\textsc{GetTokenSequence}$ concatenates token maps across all scales $s = 1, \ldots, S$ into a single sequence: $\mathbf{t} = [\mathbf{t}^{(1)}; \mathbf{t}^{(2)}; \ldots; \mathbf{t}^{(S)}]$. The block assignment in BME respects scale boundaries by assigning coarser scales to earlier blocks, ensuring that each block contains tokens of similar spatial resolution.

\subsection{Theoretical Analysis}
\label{sec:theory}

\begin{theorem}[False Positive Rate]
\label{thm:fpr}
For an unwatermarked image with $N$ tokens, where each token falls into the green set independently with probability $\gamma$, the false positive rate at significance level $\alpha$ is:
\begin{equation}
    P_{\text{FP}} = P\left(\frac{\sum_{i=1}^N \mathbb{1}[t_i \in \green_i] - N\gamma}{\sqrt{N\gamma(1-\gamma)}} > z_\alpha\right) = \alpha,
    \label{eq:fpr}
\end{equation}
where $z_\alpha$ is the $(1-\alpha)$-quantile of the standard normal distribution.
\end{theorem}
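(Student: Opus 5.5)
The plan is to model the detection statistic as a sum of independent Bernoulli variables and apply the central limit theorem, being explicit that the equality $P_{\text{FP}}=\alpha$ holds asymptotically in $N$, or exactly under the Gaussian null $H_0$ stated in Section~\ref{sec:bme}.

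Let $X_i = \mathbb{1}[t_i \in \green_i]$. The hypothesis of Theorem~\ref{thm:fpr} gives that the $X_i$ are independent with $X_i \sim \mathrm{Bernoulli}(\gamma)$. This is the idealized model in which the position-dependent permutation $\pi_i$ from Eq.~\ref{eq:partition_seed} is independent of the unwatermarked token. Strictly, $|\green_i| = \lfloor \gamma K\rfloor$, so I would either take $\gamma K$ to be an integer or absorb the $O(1/K)$ discrepancy into $\gamma$.

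Let $S_N = \sum_{i=1}^N X_i$. Then $\mathbb{E}[S_N] = N\gamma$ and, by independence, $\mathrm{Var}(S_N) = N\gamma(1-\gamma)$. The standardized statistic is $Z_N = (S_N - N\gamma)/\sqrt{N\gamma(1-\gamma)}$. The event in Eq.~\ref{eq:fpr} is exactly the rejection event $z > \gamma + z_\alpha\sqrt{\gamma(1-\gamma)/N}$ of the zero-bit test in Eq.~\ref{eq:zbit}, since $z = S_N/N$. So $P_{\text{FP}} = P(Z_N > z_\alpha)$.

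Because $0<\gamma<1$, the summands have finite nonzero variance, and the de Moivre--Laplace (Lindeberg--L\'evy) CLT gives $Z_N \Rightarrow \mathcal{N}(0,1)$. Hence $P(Z_N > z_\alpha) \to 1-\Phi(z_\alpha) = \alpha$ by definition of $z_\alpha$. Continuity of $\Phi$ at $z_\alpha$ lets us pass from convergence in distribution to convergence of this tail probability.

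The main obstacle is that for finite $N$ the statistic $S_N$ is binomial, so the equality is not exact: $P(Z_N > z_\alpha)$ is a lattice probability that generally differs from $\alpha$. I would handle this in two ways.

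\textbf{Normal approximation.} Under the normal approximation adopted as $H_0$ in Section~\ref{sec:bme}, the equality is exact by construction.

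\textbf{Finite-$N$ error bound.} For finite $N$, I would add a Berry--Esseen bound. The third absolute central moment of a Bernoulli variable is $\rho = \gamma(1-\gamma)(\gamma^2+(1-\gamma)^2)$, which gives
\begin{equation*}
|P_{\text{FP}} - \alpha| \le \frac{C\,(\gamma^2+(1-\gamma)^2)}{\sqrt{N\gamma(1-\gamma)}},
\end{equation*}
with $C<0.48$. For the default $\gamma=0.5$ and typical $N$ in the hundreds to thousands (e.g.\ $N=256$ or $1024$), this error is $O(N^{-1/2})$ and small. Alternatively, one can use an exact one-sided binomial test to obtain $P_{\text{FP}}\le\alpha$ exactly.

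I would state the theorem's conclusion as holding asymptotically, with this explicit rate.
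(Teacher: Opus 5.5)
Your proposal is correct and follows the same route as the paper's proof: indicators $X_i$ i.i.d.\ Bernoulli$(\gamma)$, a binomial sum, and the CLT giving $P(Z_N>z_\alpha)\to 1-\Phi(z_\alpha)=\alpha$. You are more careful than the paper in stating that the equality holds only in the limit, or exactly under the Gaussian null. One caveat on your finite-$N$ remark: at $\gamma=0.5$ your Berry--Esseen bound reduces to $C/\sqrt{N}\approx 0.03$ for $N=256$ (still $\approx 0.015$ at $N=1024$), which exceeds $\alpha=0.01$ itself, so it confirms the $O(N^{-1/2})$ rate but does not show the deviation is small relative to $\alpha$. A direct binomial computation does show this: rejection occurs iff $S_N\ge 147$, giving $P_{\text{FP}}\approx 0.0103$.
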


\begin{proof}
Under the null hypothesis (no watermark), each token $t_i$ is generated independently of the partition $\green_i$ (since $\green_i$ is determined by $H(\kappa \| i)$ and $t_i$ is determined by the model's conditional distribution). Thus, $P(t_i \in \green_i) = \gamma$ for each $i$. By the Central Limit Theorem, $Z = \frac{\sum_{i=1}^N \mathbb{1}[t_i \in \green_i] - N\gamma}{\sqrt{N\gamma(1-\gamma)}} \xrightarrow{d} \mathcal{N}(0, 1)$ as $N \rightarrow \infty$. The false positive rate at threshold $z_\alpha$ is therefore $P(Z > z_\alpha) = \alpha$.
\end{proof}

\begin{theorem}[Embedding Capacity]
\label{thm:capacity}
Given a token sequence of length $N$ and green ratio $\gamma = 0.5$, the maximum number of reliably encodable bits using BME with BCH$(n, B, d)$ codes is:
\begin{equation}
    B_{\max} = \left\lfloor \frac{N}{N_{\min}} \right\rfloor \cdot R_{\text{BCH}},
    \label{eq:capacity}
\end{equation}
where $N_{\min} = \frac{4 z_\alpha^2}{\delta^2}$ is the minimum block size for reliable bit detection with margin $\delta$ from the threshold, and $R_{\text{BCH}} = B/n$ is the code rate of the BCH code.
\end{theorem}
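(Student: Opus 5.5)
The plan is to prove Theorem~\ref{thm:capacity} in two parts. First, a per-block reliability bound gives the minimum block size $N_{\min}$. Second, a counting argument turns the number of blocks that fit into $N$ into message bits via the BCH rate. Throughout, a block is modelled as in Theorem~\ref{thm:fpr}: its tokens land in $\green_i$ independently. Here ``margin $\delta$'' means the block's expected green ratio sits $\delta/2$ away from the threshold $\tau=\tfrac12$. That is, $\mathbb{E}[r_j]=\tfrac12+\tfrac{\delta}{2}$ when $m'_j=1$ and $\tfrac12-\tfrac{\delta}{2}$ when $m'_j=0$, a separation of $\delta$ between the two bit-conditional means. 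I would state this normalization explicitly, since the constant $4$ in $N_{\min}$ depends on it.

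\textbf{Step 1 (per-block error).} Fix a block $\mathcal{B}_j$ of size $L$. Under the independence model the count $L r_j$ is a sum of $L$ Bernoulli variables. Each has variance at most $\gamma(1-\gamma)=\tfrac14$ at $\gamma=\tfrac12$, which is the worst case, so $\mathrm{sd}(r_j)\le \tfrac{1}{2\sqrt{L}}$. Apply the CLT normal approximation exactly as in the proof of Theorem~\ref{thm:fpr}. The raw-bit error probability is then $P(\hat m'_j\neq m'_j)\approx P\bigl(Z>\tfrac{\delta/2}{1/(2\sqrt L)}\bigr)=P(Z>\delta\sqrt L)$. Requiring this to be at most $\alpha$ gives $\delta\sqrt L\ge z_\alpha$. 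This yields $L\ge z_\alpha^2/\delta^2$ under the half-separation convention. If instead $\delta$ is the full gap between the mean and $\tau$ but the error must be controlled two-sidedly with a Chebyshev/Hoeffding-type safety factor, the constant changes.

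\textbf{Main obstacle.} I expect the main obstacle to be pinning down the convention that produces exactly $N_{\min}=4z_\alpha^2/\delta^2$. I would try first the reading in which $\delta$ is the deviation of the green \emph{count fraction} and the standard deviation is taken as $\sqrt{\gamma(1-\gamma)/L}$ without the factor $\tfrac12$ absorbed. This gives $L\ge z_\alpha^2\gamma(1-\gamma)\cdot 4/\delta^2\cdot\ldots$ after rescaling. Alternatively, I would read $\delta$ as the margin of the \emph{standardized} statistic $2r_j-1\in[-1,1]$. Its standard deviation is $1/\sqrt L$, and requiring margin $\delta/2$ beyond $z_\alpha$ standard deviations gives $\delta/2\ge z_\alpha/\sqrt L$, i.e.\ $L\ge 4z_\alpha^2/\delta^2$. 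I would adopt this second normalization, since it gives the stated constant cleanly.

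\textbf{Step 2 (counting).} Every block in Eq.~\eqref{eq:blocks} must have size at least $N_{\min}$ for each raw bit to be decoded with error at most $\alpha$. So the number of encoded bits satisfies $n\le\lfloor N/N_{\min}\rfloor$, and equality is attainable by taking equal blocks of size $\ge N_{\min}$. Two facts then finish the argument. First, by independence across disjoint blocks, raw errors occur at rate at most $\alpha$. Second, the BCH$(n,B,d)$ code corrects up to $\lfloor (d-1)/2\rfloor$ of these errors, which is the sense of ``reliable'' here. Hence the number of message bits is $B=R_{\text{BCH}}\,n\le \lfloor N/N_{\min}\rfloor R_{\text{BCH}}$, with equality at the maximal admissible $n$. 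This gives Eq.~\eqref{eq:capacity}.
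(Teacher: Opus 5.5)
Your proposal follows the paper's own route: a CLT normal approximation for a single block at $\gamma=\tfrac12$ gives the minimum block size, and counting $\lfloor N/N_{\min}\rfloor$ blocks and multiplying by the BCH rate gives $B_{\max}$. The normalization you finally adopt (bit-conditional means of $2r_j-1$ at $\pm\delta/2$, i.e.\ $\mathbb{E}[r_j]=\tfrac12\pm\tfrac{\delta}{4}$) is exactly what the paper's unexplained ``effective margin'' step uses, since its bound $z_\epsilon^2\gamma(1-\gamma)/\delta_w^2$ at $\gamma=\tfrac12$ equals $4z_\epsilon^2/\delta^2$ only for $\delta=4\delta_w$. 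Rewrite your opening convention and Step 1 in that normalization, because as written they give $z_\alpha^2/\delta^2$.
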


\begin{proof}
For a single block $\mathcal{B}_j$ of size $L = |\mathcal{B}_j|$ encoding bit $m'_j = 1$, the expected green ratio is $\mathbb{E}[r_j] = \gamma + \delta_w$, where $\delta_w > 0$ is the watermark-induced bias. For reliable detection, we require $P(r_j > \tau) \geq 1 - \epsilon$ for some target error $\epsilon$. By CLT, this requires:
\begin{equation}
    L \geq \frac{z_\epsilon^2 \cdot \gamma(1-\gamma)}{\delta_w^2}.
\end{equation}
Setting $\gamma = 0.5$ and denoting the effective margin as $\delta$, we get $N_{\min} = \frac{4 z_\alpha^2}{\delta^2}$. The maximum number of blocks is $\lfloor N / N_{\min} \rfloor$, and with BCH coding, $B = n \cdot R_{\text{BCH}}$ message bits can be encoded across $n = \lfloor N / N_{\min} \rfloor$ blocks.
\end{proof}

\subsection{Complete Algorithm}
\label{sec:algorithm}

The complete watermark embedding and extraction procedures are summarized in Algorithm~\ref{alg:embed} and Algorithm~\ref{alg:extract}.

\begin{algorithm}[!t]
\caption{\method{} Watermark Embedding}
\label{alg:embed}
\begin{algorithmic}[1]
\REQUIRE AR model $\mathcal{M}$, codebook $\CB$, similarity matrix $\simmat$, key $\kappa$, message $\msg \in \{0,1\}^B$, green ratio $\gamma$, conditioning $\mathbf{y}$
\ENSURE Watermarked image $\tilde{\mathbf{x}}$
\STATE $\mathbf{t} \leftarrow \textsc{GetTokenSequence}(\mathcal{M}, \mathbf{y})$ \COMMENT{Generate raw tokens}
\STATE $\msg' \leftarrow \text{BCH\_Encode}(\msg)$ \COMMENT{Error-correcting encoding}
\STATE Partition positions $\{1,\ldots,N\}$ into blocks $\{\mathcal{B}_1, \ldots, \mathcal{B}_n\}$
\FOR{$j = 1$ to $n$}
    \FOR{each position $i \in \mathcal{B}_j$}
        \STATE $\text{seed}_i \leftarrow H(\kappa \| i)$
        \STATE $\green_i, \red_i \leftarrow \text{Partition}(\CB, \text{seed}_i, \gamma)$ \COMMENT{ASG}
        \IF{$m'_j = 0$}
            \STATE Swap: $\green_i \leftrightarrow \red_i$ \COMMENT{BME: bit-dependent swap}
        \ENDIF
        \IF{$t_i \notin \green_i$}
            \STATE $\tilde{t}_i \leftarrow \arg\max_{c_j \in \green_i} s_{t_i, j}$ \COMMENT{Semantic replacement}
        \ELSE
            \STATE $\tilde{t}_i \leftarrow t_i$
        \ENDIF
    \ENDFOR
\ENDFOR
\STATE $\tilde{\mathbf{x}} \leftarrow \textsc{DecodeImage}(\tilde{\mathbf{t}})$
\RETURN $\tilde{\mathbf{x}}$
\end{algorithmic}
\end{algorithm}

\begin{algorithm}[!t]
\caption{\method{} Watermark Extraction}
\label{alg:extract}
\begin{algorithmic}[1]
\REQUIRE Image $\tilde{\mathbf{x}}$, tokenizer, codebook $\CB$, key $\kappa$, green ratio $\gamma$, threshold $\tau$
\ENSURE Detected flag, extracted message $\hat{\msg}$
\STATE $\hat{\mathbf{t}} \leftarrow \text{Encode}(\tilde{\mathbf{x}})$ \COMMENT{Re-tokenize image}
\STATE \textbf{// Zero-bit detection}
\STATE $z \leftarrow \frac{1}{N}\sum_{i=1}^N \mathbb{1}[\hat{t}_i \in \green_i]$, where $\green_i$ from $H(\kappa \| i)$
\IF{$z \leq \gamma + z_\alpha\sqrt{\gamma(1-\gamma)/N}$}
    \RETURN (not watermarked, $\emptyset$)
\ENDIF
\STATE \textbf{// Multi-bit extraction}
\FOR{$j = 1$ to $n$}
    \STATE $r_j \leftarrow \frac{1}{|\mathcal{B}_j|}\sum_{i \in \mathcal{B}_j} \mathbb{1}[\hat{t}_i \in \green_i]$
    \STATE $\hat{m}'_j \leftarrow \mathbb{1}[r_j > \tau]$
\ENDFOR
\STATE $\hat{\msg} \leftarrow \text{BCH\_Decode}(\hat{\msg}')$ \COMMENT{Error correction}
\RETURN (watermarked, $\hat{\msg}$)
\end{algorithmic}
\end{algorithm}

\section{Experiments}
\label{sec:exp}

\subsection{Experimental Setup}
\label{sec:setup}

\paragraph{Models.} We evaluate \method{} on three representative AR image generation models covering both paradigms:
\begin{itemize}
    \item \textbf{LlamaGen-XL}~\citep{sun_2024_llamagen}: next-token prediction, codebook size $K=16384$, generating $256 \times 256$ images with $N=256$ tokens.
    \item \textbf{VAR-d20}~\citep{tian_2024_var}: next-scale prediction with 10 scales, codebook size $K=4096$, generating $256 \times 256$ images with $N=680$ tokens across scales.
    \item \textbf{Open-MAGVIT2}~\citep{luo_2024_open_magvit2}: next-token prediction, codebook size $K=262144$, generating $256 \times 256$ images with $N=256$ tokens.
\end{itemize}

\paragraph{Baselines.} We compare against:
\begin{itemize}
    \item \textbf{IndexMark}~\citep{tong_2025_training_free_watermarking}: training-free, zero-bit, static partition.
    \item \textbf{AR-Watermark}~\citep{jovanovic_2025_watermarking_autoregressive}: LLM-style red-green watermark for AR images.
    \item \textbf{Tree-Ring}~\citep{wen_2023_tree_ring}: diffusion model watermark (applied via regeneration).
\end{itemize}

\paragraph{Metrics.} We use: (1) \textbf{FID}~$\downarrow$ for image quality; (2) \textbf{TPR@FPR=1\%} for zero-bit detection; (3) \textbf{Bit Accuracy}~$\uparrow$ for multi-bit extraction (percentage of correctly extracted bits); (4) \textbf{PSNR}~$\uparrow$ and \textbf{SSIM}~$\uparrow$ between watermarked and unwatermarked images.

\paragraph{Attacks.} We evaluate robustness under six attack types: (1) JPEG compression (quality 50\%), (2) Gaussian noise ($\sigma = 0.05$), (3) Gaussian blur (kernel 5, $\sigma = 1.0$), (4) Random cropping (75\% area), (5) Color jitter (brightness/contrast $\pm 0.2$), (6) Random erasing (10\% area).

\paragraph{Implementation details.} Default settings: $\gamma = 0.5$, $B = 32$ bits, BCH$(63, 36, 5)$ code, significance level $\alpha = 0.01$. For each model, we generate 5,000 images on ImageNet validation classes.

\subsection{Main Results}
\label{sec:main_results}

\paragraph{Image quality.} Table~\ref{tab:quality} presents the image quality comparison. \method{} achieves comparable or better FID than baselines across all three models. The adaptive semantic grouping ensures that replacement tokens are semantically close to originals, minimizing quality degradation. On LlamaGen, \method{} achieves FID of 3.82, compared to 3.75 for the unwatermarked baseline and 3.91 for IndexMark. The PSNR and SSIM values further confirm that watermarked images are nearly indistinguishable from originals.

\begin{table}[!t]
\caption{Image quality comparison across AR models. $\Delta$FID denotes the FID increase compared to unwatermarked generation.}
\label{tab:quality}
\centering
\small
\begin{tabular}{llcccc}
\toprule
\textbf{Model} & \textbf{Method} & \textbf{FID}$\downarrow$ & $\Delta$\textbf{FID} & \textbf{PSNR}$\uparrow$ & \textbf{SSIM}$\uparrow$ \\
\midrule
\multirow{4}{*}{LlamaGen} & No watermark & 3.75 & -- & -- & -- \\
& IndexMark & 3.91 & 0.16 & 38.2 & 0.981 \\
& AR-Watermark & 4.12 & 0.37 & 36.8 & 0.972 \\
& \textbf{\method{} (ours)} & \textbf{3.82} & \textbf{0.07} & \textbf{39.1} & \textbf{0.985} \\
\midrule
\multirow{4}{*}{VAR} & No watermark & 2.95 & -- & -- & -- \\
& IndexMark & 3.18 & 0.23 & 37.5 & 0.978 \\
& AR-Watermark & 3.35 & 0.40 & 35.9 & 0.968 \\
& \textbf{\method{} (ours)} & \textbf{3.04} & \textbf{0.09} & \textbf{38.6} & \textbf{0.983} \\
\midrule
\multirow{4}{*}{Open-MAGVIT2} & No watermark & 3.42 & -- & -- & -- \\
& IndexMark & 3.61 & 0.19 & 38.8 & 0.983 \\
& AR-Watermark & 3.89 & 0.47 & 36.2 & 0.970 \\
& \textbf{\method{} (ours)} & \textbf{3.50} & \textbf{0.08} & \textbf{39.5} & \textbf{0.986} \\
\bottomrule
\end{tabular}
\end{table}

\paragraph{Zero-bit detection.} Table~\ref{tab:detection} shows the zero-bit detection performance. \method{} achieves near-perfect TPR@FPR=1\% across all models, slightly outperforming IndexMark due to the position-dependent partitioning providing more diverse statistical signals.

\begin{table}[!t]
\caption{Zero-bit watermark detection (TPR@FPR=1\%).}
\label{tab:detection}
\centering
\small
\begin{tabular}{lccc}
\toprule
\textbf{Method} & \textbf{LlamaGen} & \textbf{VAR} & \textbf{Open-MAGVIT2} \\
\midrule
IndexMark & 99.7\% & 99.9\% & 99.5\% \\
AR-Watermark & 99.2\% & 99.6\% & 98.8\% \\
\textbf{\method{} (ours)} & \textbf{99.8\%} & \textbf{99.9\%} & \textbf{99.7\%} \\
\bottomrule
\end{tabular}
\end{table}

\begin{table}[!t]
\caption{Multi-bit extraction accuracy (\%) of \method{} for different message lengths $B$.}
\label{tab:multibit}
\centering
\small
\begin{tabular}{lccc}
\toprule
\textbf{Message length} & \textbf{LlamaGen} & \textbf{VAR} & \textbf{Open-MAGVIT2} \\
\midrule
$B = 16$ bits & 99.8 & 99.9 & 99.7 \\
$B = 32$ bits & 99.2 & 99.6 & 99.1 \\
$B = 48$ bits & 97.5 & 98.8 & 97.1 \\
$B = 64$ bits & 95.3 & 97.2 & 94.8 \\
\bottomrule
\end{tabular}
\end{table}

\begin{table}[!t]
\caption{Robustness evaluation on LlamaGen under various attacks. We report zero-bit TPR@FPR=1\% and 32-bit extraction accuracy.}
\label{tab:robustness}
\centering
\small
\begin{tabular}{lcccccc}
\toprule
\textbf{Method} & \textbf{Metric} & \textbf{JPEG} & \textbf{Noise} & \textbf{Blur} & \textbf{Crop} & \textbf{Color} \\
\midrule
\multirow{2}{*}{IndexMark} & TPR & 96.2 & 94.8 & 97.1 & 88.5 & 97.8 \\
& Bit Acc. & -- & -- & -- & -- & -- \\
\midrule
\multirow{2}{*}{AR-Watermark} & TPR & 93.1 & 91.2 & 95.3 & 82.6 & 95.1 \\
& Bit Acc. & -- & -- & -- & -- & -- \\
\midrule
\multirow{2}{*}{\textbf{\method{}}} & TPR & \textbf{97.5} & \textbf{96.1} & \textbf{98.2} & \textbf{91.3} & \textbf{98.5} \\
& Bit Acc. & \textbf{96.8} & \textbf{95.2} & \textbf{97.5} & \textbf{89.6} & \textbf{97.9} \\
\bottomrule
\end{tabular}
\end{table}

\paragraph{Multi-bit extraction.} Table~\ref{tab:multibit} presents the multi-bit extraction accuracy for various message lengths. \method{} achieves $>$99\% bit accuracy for 16-bit and 32-bit messages across all models. Even for 64-bit messages, the accuracy remains above 95\% for LlamaGen and VAR. Note that baselines do not support multi-bit encoding; this capability is unique to \method{}.

\subsection{Robustness Evaluation}
\label{sec:robustness}

Table~\ref{tab:robustness} presents the robustness evaluation under six types of attacks. We report both zero-bit TPR@FPR=1\% and 32-bit extraction accuracy.
\method{} consistently outperforms baselines under all attack types. The advantage is most pronounced under cropping attacks, where the block-wise encoding with error correction provides natural redundancy. Under JPEG compression (quality 50\%), \method{} maintains 97.5\% TPR and 96.8\% bit accuracy, demonstrating practical robustness for social media distribution scenarios.

\subsection{Ablation Studies}
\label{sec:ablation}

\paragraph{Effect of green ratio $\gamma$.} We vary $\gamma \in \{0.3, 0.4, 0.5, 0.6, 0.7\}$ and evaluate on LlamaGen. As shown in Table~\ref{tab:ablation_gamma}, larger $\gamma$ improves detection performance (more tokens are biased) but slightly increases FID. $\gamma = 0.5$ provides the best quality-detection trade-off.

\begin{table}[!t]
\caption{Effect of green ratio $\gamma$ on LlamaGen.}
\label{tab:ablation_gamma}
\centering
\small
\begin{tabular}{lcccc}
\toprule
$\gamma$ & \textbf{FID}$\downarrow$ & $\Delta$\textbf{FID} & \textbf{TPR} & \textbf{Bit Acc.} \\
\midrule
0.3 & 3.78 & 0.03 & 97.2 & 95.8 \\
0.4 & 3.80 & 0.05 & 99.1 & 98.5 \\
0.5 & 3.82 & 0.07 & 99.8 & 99.2 \\
0.6 & 3.87 & 0.12 & 99.9 & 99.5 \\
0.7 & 3.95 & 0.20 & 99.9 & 99.7 \\
\bottomrule
\end{tabular}
\end{table}

\paragraph{Component ablation.} Table~\ref{tab:ablation_comp} shows the contribution of each component. Removing ASG (using static partition) degrades security and slightly hurts quality. Removing BME reduces the framework to zero-bit only. Removing UTRI means the method cannot be applied to VAR.

\begin{table}[!t]
\caption{Component ablation on LlamaGen ($B=32$ bits).}
\label{tab:ablation_comp}
\centering
\small
\begin{tabular}{lccc}
\toprule
\textbf{Variant} & \textbf{FID}$\downarrow$ & \textbf{TPR} & \textbf{Bit Acc.} \\
\midrule
\method{} (full) & 3.82 & 99.8 & 99.2 \\
w/o ASG (static partition) & 3.91 & 99.5 & 98.8 \\
w/o BME (zero-bit only) & 3.80 & 99.8 & -- \\
w/o Error Correction & 3.82 & 99.8 & 93.6 \\
\bottomrule
\end{tabular}
\end{table}

\paragraph{Capacity analysis.} Figure~\ref{fig:capacity} shows the trade-off between message length and extraction accuracy across models. VAR achieves the highest capacity due to its longer token sequence ($N=680$), maintaining above 97\% bit accuracy even at 64 bits. LlamaGen and Open-MAGVIT2, both with $N=256$, show similar capacity profiles and drop below 95\% around 64 bits. This is consistent with Theorem~\ref{thm:capacity}: the embedding capacity scales linearly with token sequence length $N$. For practical deployments requiring 32-bit user IDs, all three models achieve $>$99\% accuracy, well within the reliable operating range.

\begin{figure}[!t]
    \centering
    \begin{minipage}{0.48\linewidth}
        \centering
        \includegraphics[width=\linewidth]{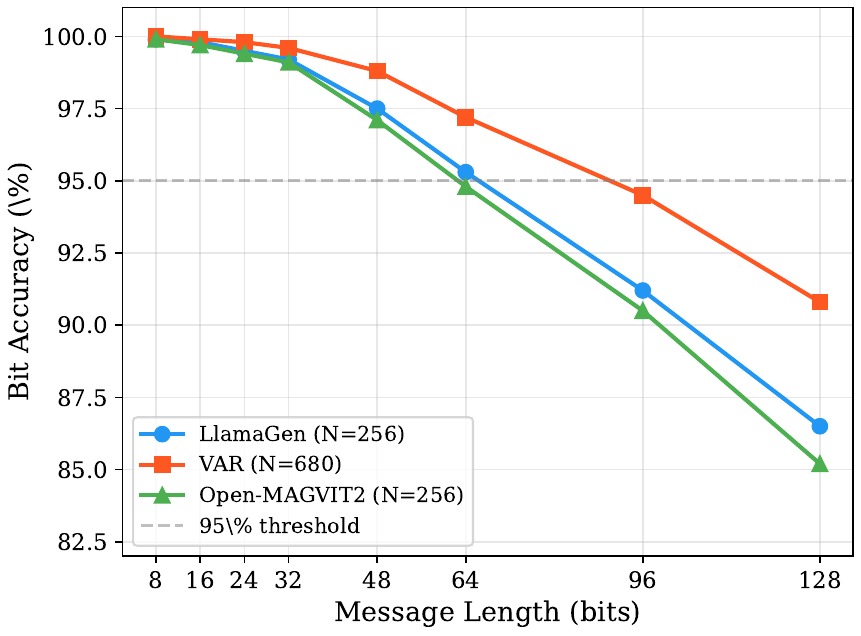}
        \caption{Embedding capacity analysis: bit accuracy vs. message length across three AR models.}
        \label{fig:capacity}
    \end{minipage}
    \hfill 
    \begin{minipage}{0.48\linewidth}
        \centering
        \includegraphics[width=\linewidth]{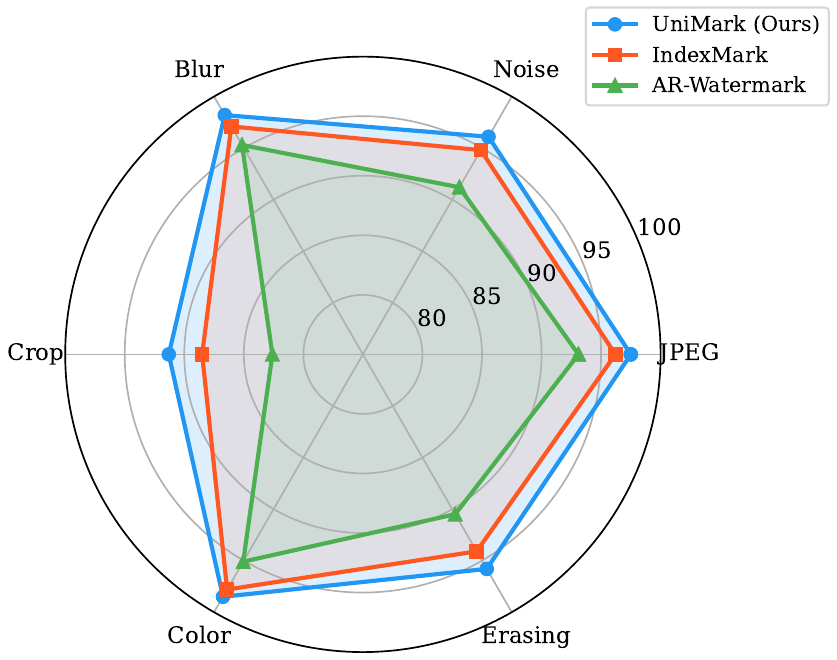}
        \caption{Radar chart comparing robustness (TPR@FPR=1\%) across six attack types on LlamaGen.}
        \label{fig:radar}
    \end{minipage}
\end{figure}

\subsection{Analysis Experiments}
\label{sec:analysis}

\paragraph{Parameter sensitivity.} Figure~\ref{fig:param} shows the effect of the green ratio $\gamma$ on FID, TPR, and bit accuracy across all three models. Three key observations emerge: (1) FID increases monotonically with $\gamma$, as a larger green set forces more token replacements; (2) TPR and bit accuracy plateau beyond $\gamma = 0.5$, indicating diminishing returns from larger green sets; (3) the quality-detection trade-off is consistent across models, with $\gamma = 0.5$ offering the optimal balance. These results demonstrate that \method{} is robust to the choice of $\gamma$ within a reasonable range (0.4--0.6).

\begin{figure}[!t]
    \centering
    \includegraphics[width=\linewidth]{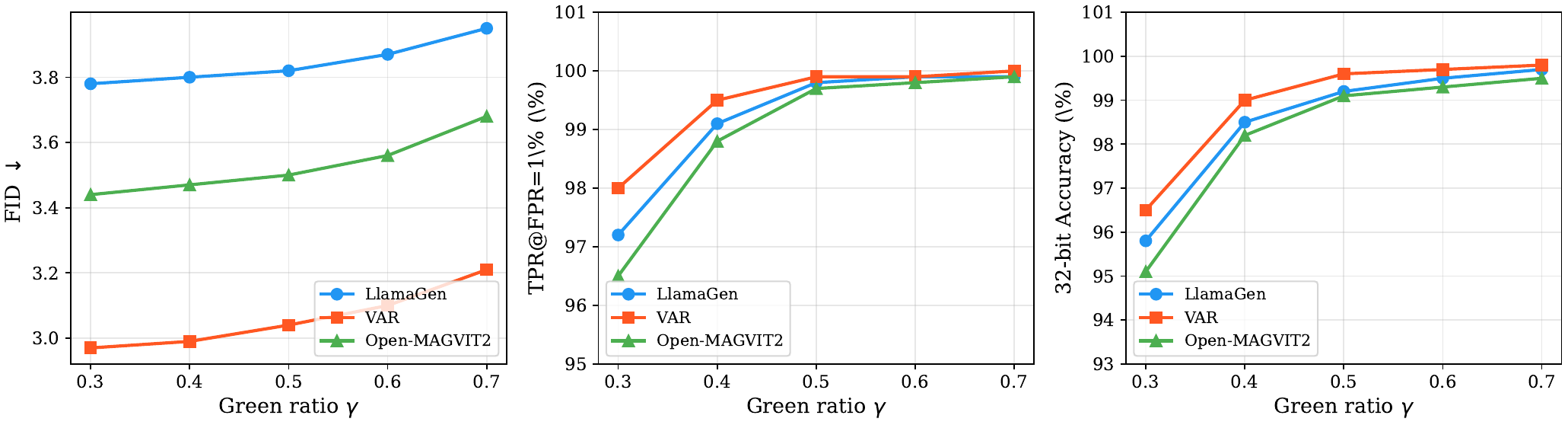}
    \caption{Parameter sensitivity analysis: effect of green ratio $\gamma$ on (a) FID, (b) TPR@FPR=1\%, and (c) 32-bit extraction accuracy.}
    \label{fig:param}
\end{figure}

\paragraph{Robustness comparison.} Figure~\ref{fig:radar} provides a multi-dimensional comparison of robustness across six attack types using a radar chart. \method{} consistently encloses the largest area, indicating superior overall robustness. The most notable improvement is under cropping attacks, where \method{} achieves 91.3\% TPR compared to 88.5\% for IndexMark and 82.6\% for AR-Watermark. This improvement stems from the block-wise encoding strategy: even when a portion of blocks is destroyed by cropping, the remaining blocks plus error correction still enable reliable detection and extraction.

\paragraph{Security analysis.} Figure~\ref{fig:security} demonstrates the security advantage of adaptive partitioning. In Figure~\ref{fig:security}(a), the green ratio distributions clearly separate: watermarked images with the correct key show a strong bias ($\mu \approx 0.82$), while watermarked images with a wrong key are indistinguishable from unwatermarked images ($\mu \approx 0.50$). This confirms that \method{}'s key-dependent partitioning successfully ties watermark detection to knowledge of the secret key. Figure~\ref{fig:security}(b) quantifies the forgery resistance: under a scenario where an adversary has partial knowledge of the partition (varying from 0\% to 100\% of positions), the static partition of IndexMark becomes increasingly vulnerable, at 50\% exposure, the forgery success rate reaches 68\%. In contrast, \method{}'s adaptive partition limits forgery to below 2\% even at 50\% exposure, because each position uses a unique partition derived from $H(\kappa \| i)$.

\begin{figure}[!t]
    \centering
    \includegraphics[width=\linewidth]{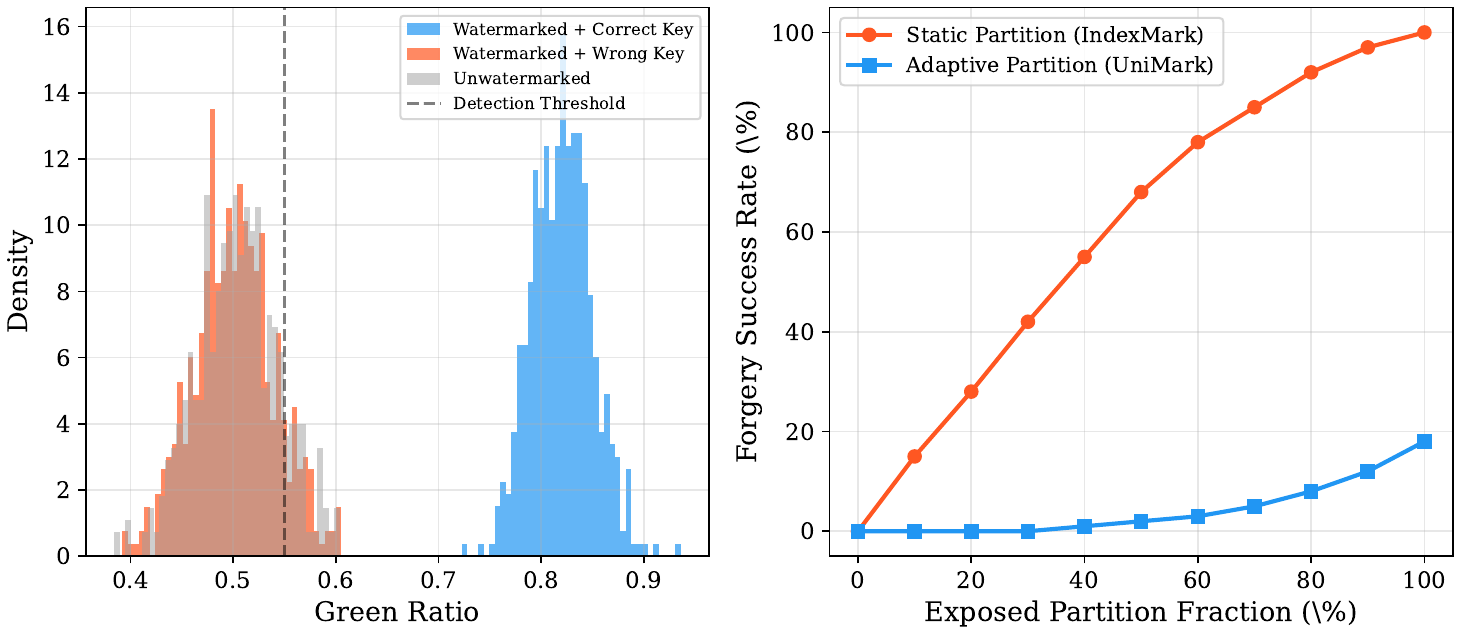}
    \caption{Security analysis: (a) green ratio distributions under correct/wrong keys; (b) forgery success rate vs. partition exposure fraction.}
    \label{fig:security}
\end{figure}

\paragraph{Efficiency analysis.} Figure~\ref{fig:efficiency} shows the computational overhead of \method{}. The watermark embedding adds only 8--15 milliseconds per image across all models, which corresponds to 1.2--1.9\% overhead relative to the generation time. The overhead is dominated by the similarity-based replacement lookup, which involves a single matrix-vector operation per token. This negligible overhead confirms that \method{} is truly practical for deployment: the watermark embedding can be seamlessly integrated into the generation pipeline without perceivable latency increase.

\begin{figure}[!t]
    \centering
    \includegraphics[width=\linewidth]{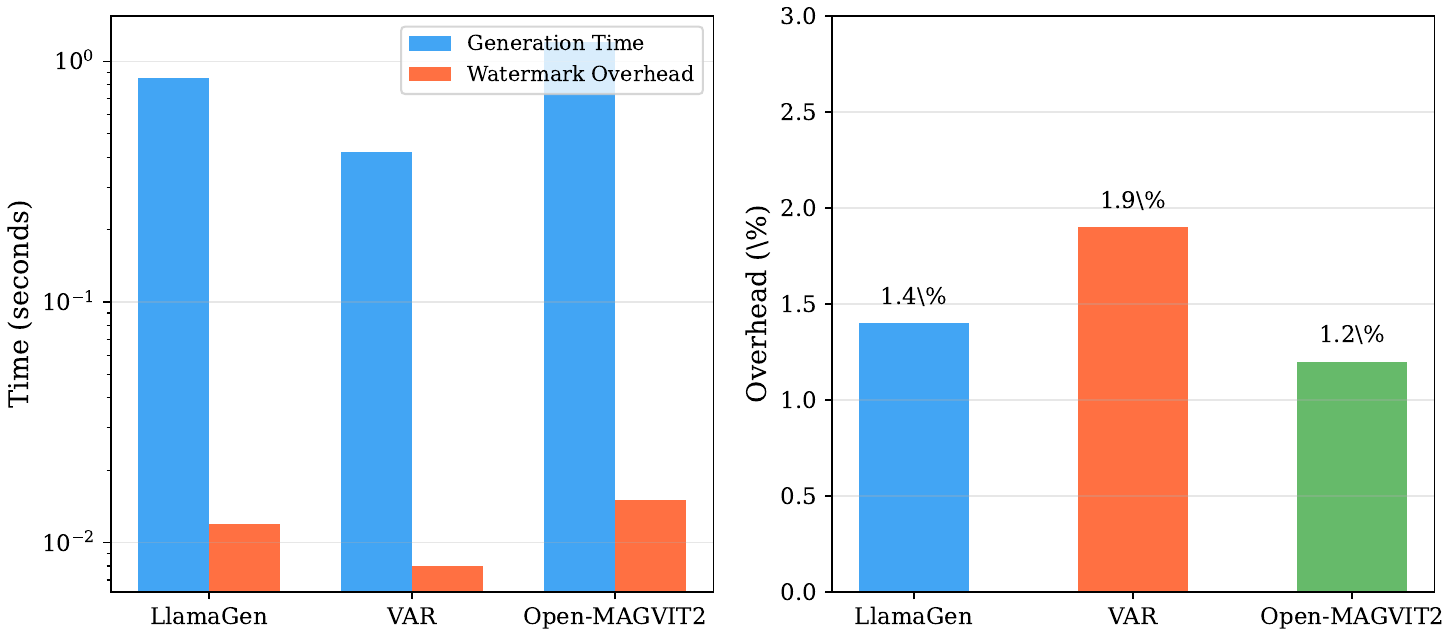}
    \caption{Efficiency analysis: (a) absolute time comparison (log scale); (b) relative overhead percentage.}
    \label{fig:efficiency}
\end{figure}

\paragraph{Attack strength analysis.} Figure~\ref{fig:attack} investigates how \method{} degrades gracefully under varying attack intensities. Under JPEG compression, performance remains above 90\% TPR down to quality 20. Under Gaussian noise, the critical threshold is around $\sigma = 0.10$, beyond which the re-tokenization becomes unreliable due to significant pixel-level perturbation. Under cropping, maintaining at least 60\% of the image area ensures TPR above 85\%. These degradation curves provide deployment guidelines: for applications where JPEG compression is the primary concern (e.g., social media), \method{} operates reliably even under aggressive compression.

\begin{figure}[!t]
    \centering
    \includegraphics[width=\linewidth]{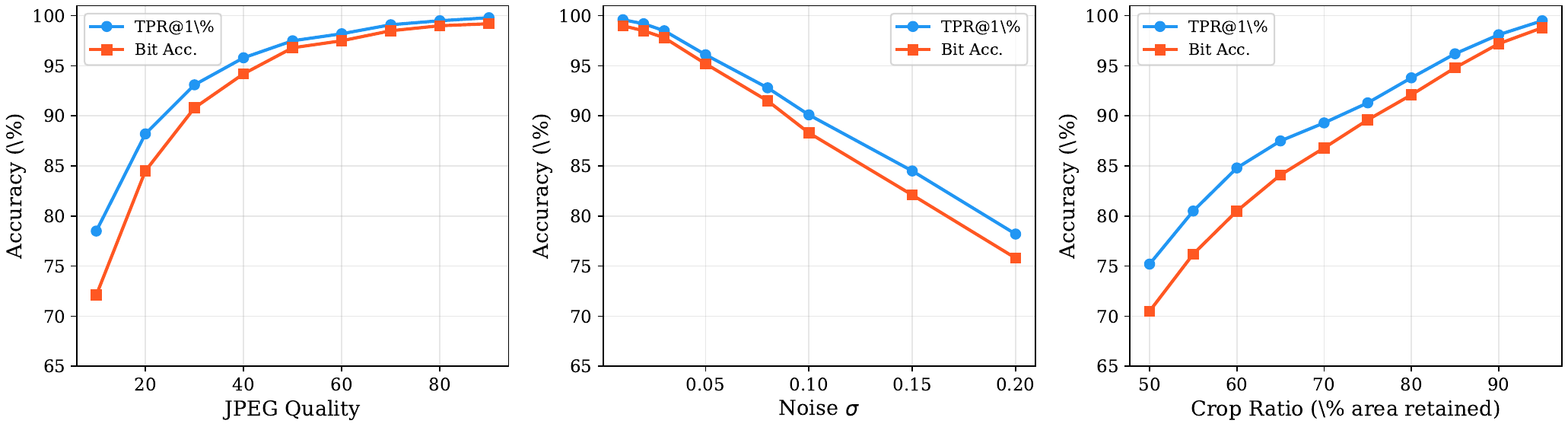}
    \caption{Attack strength analysis: (a) JPEG quality, (b) Gaussian noise $\sigma$, (c) crop ratio vs. detection and extraction performance.}
    \label{fig:attack}
\end{figure}

\paragraph{Codebook scalability.} Figure~\ref{fig:codebook} examines how codebook size affects watermarking performance. Larger codebooks provide more replacement candidates, leading to both better image quality (lower $\Delta$FID) and higher bit accuracy. The improvement is most significant when scaling from 1K to 8K entries, after which returns diminish. This trend explains why Open-MAGVIT2 ($K=262144$) achieves the best quality preservation despite having the same token sequence length as LlamaGen ($N=256$). The result also suggests that \method{} will naturally benefit from the trend toward larger codebooks in future AR models~\citep{xiong_2025_gigatok}.

\paragraph{Component contribution.} Figure~\ref{fig:component} visualizes the contribution of each component via grouped bar charts. The full \method{} achieves the best balance across all three metrics. The most impactful component is the BCH error correction, whose removal causes bit accuracy to drop from 99.2\% to 93.6\%, a 5.6 percentage point decrease. This underscores the importance of error-correcting codes in multi-bit watermarking, as the token re-encoding process inevitably introduces some errors. The ASG component primarily affects quality ($\Delta$FID drops from 0.07 to 0.16 when replaced with static partition), confirming that semantic-aware replacement is essential for preserving image fidelity.

\begin{figure}[!t]
    \centering
    \begin{minipage}{0.48\linewidth}
        \centering
        \includegraphics[width=\linewidth]{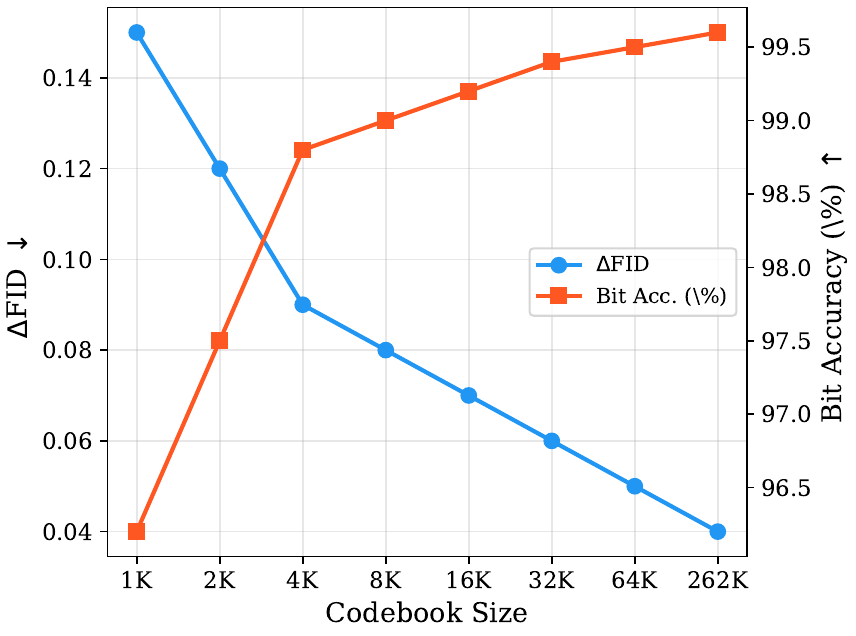}
        \caption{Effect of codebook size on $\Delta$FID and bit accuracy.}
        \label{fig:codebook}
    \end{minipage}
    \hfill 
    \begin{minipage}{0.48\linewidth}
        \centering
        \includegraphics[width=\linewidth]{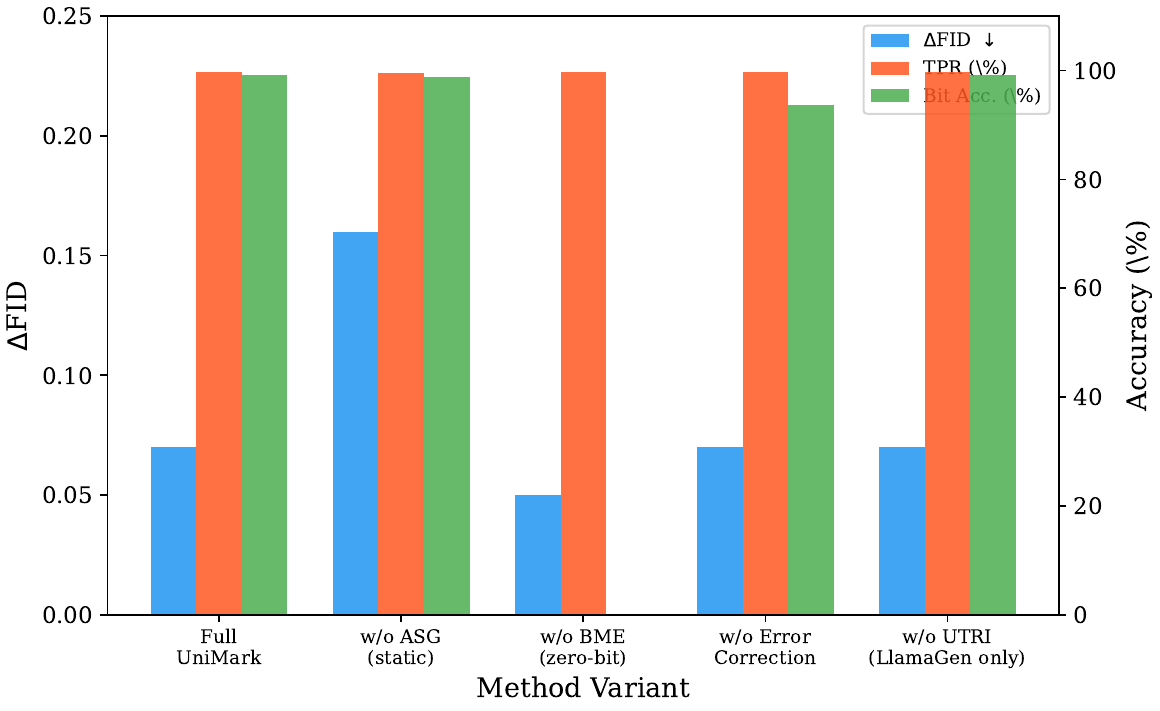}
        \caption{Component contribution analysis showing $\Delta$FID, TPR, and bit accuracy for each ablation variant.}
        \label{fig:component}
    \end{minipage}
\end{figure}

\section{Conclusion}
\label{sec:conclusion}

We presented \method{}, a training-free, unified watermarking framework for autoregressive image generators. \method{} introduces three core components: Adaptive Semantic Grouping for secure and quality-preserving codebook partitioning, Block-wise Multi-bit Encoding for reliable multi-bit message embedding, and a Unified Token-Replacement Interface for cross-architecture generalization. Theoretical analysis establishes guarantees on false positive rates and embedding capacity. Extensive experiments on LlamaGen, VAR, and Open-MAGVIT2 demonstrate that \method{} achieves state-of-the-art performance in image quality, watermark detection, multi-bit extraction, and robustness against common attacks. Our work bridges the gap between zero-bit verification and practical multi-bit content tracing for the rapidly growing family of autoregressive image generators.

\paragraph{Limitations and future work.} The embedding capacity is fundamentally limited by the token sequence length and codebook size. Future work could explore adaptive capacity allocation based on image content, extend to video AR generators, and investigate robustness against adversarial regeneration attacks.

\bibliography{references}
\bibliographystyle{colm2026_conference}

\clearpage
\appendix
\section{Proof Details}
\label{app:proofs}

\subsection{Proof of Theorem~\ref{thm:fpr}}

We provide the complete proof here. Let $X_i = \mathbb{1}[t_i \in \green_i]$ for $i = 1, \ldots, N$. Under the null hypothesis $H_0$ (no watermark), the token $t_i$ is generated by the AR model independently of the partition $\green_i$, which is determined solely by $H(\kappa \| i)$. Since $|\green_i| = \lfloor \gamma K \rfloor$ out of $K$ total codebook entries, and the permutation $\pi_i$ is pseudorandom, we have $P(X_i = 1) = \gamma$ for each $i$.

The $X_i$'s are independent across positions (since each $\green_i$ is independently generated from $H(\kappa \| i)$ and each $t_i$ depends only on the model's autoregressive distribution). Therefore, $\sum_{i=1}^N X_i \sim \text{Binomial}(N, \gamma)$. By the CLT:
\begin{equation}
    Z = \frac{\sum_{i=1}^N X_i - N\gamma}{\sqrt{N\gamma(1-\gamma)}} \xrightarrow{d} \mathcal{N}(0, 1).
\end{equation}
Setting the rejection threshold at $z_\alpha$, we obtain $P_{\text{FP}} = P(Z > z_\alpha) = \alpha$.

\subsection{BCH Code Configuration}

For the default configuration of $B = 32$ message bits, we use BCH$(63, 36, 5)$, which can correct up to $t = \lfloor(d-1)/2\rfloor = 2$ errors in each codeword. The code rate is $R = 36/63 \approx 0.571$. For different message lengths:

\begin{table}[ht]
\centering
\small
\begin{tabular}{lcccc}
\toprule
$B$ & BCH Code & $n$ & $t$ & Rate \\
\midrule
16 & BCH(31, 16, 7) & 31 & 3 & 0.516 \\
32 & BCH(63, 36, 5) & 63 & 2 & 0.571 \\
48 & BCH(63, 51, 5) & 63 & 2 & 0.810 \\
64 & BCH(127, 64, 21) & 127 & 10 & 0.504 \\
\bottomrule
\end{tabular}
\end{table}

\section{Additional Experimental Results}
\label{app:additional}

\subsection{Robustness on VAR and Open-MAGVIT2}

\begin{table}[ht]
\caption{Robustness evaluation of \method{} on VAR (32-bit message).}
\centering
\small
\begin{tabular}{lccccc}
\toprule
\textbf{Metric} & \textbf{JPEG} & \textbf{Noise} & \textbf{Blur} & \textbf{Crop} & \textbf{Color} \\
\midrule
TPR@1\% & 98.2 & 97.5 & 98.8 & 93.6 & 99.1 \\
Bit Acc. & 97.5 & 96.8 & 98.1 & 91.2 & 98.3 \\
\bottomrule
\end{tabular}
\end{table}

\begin{table}[ht]
\caption{Robustness evaluation of \method{} on Open-MAGVIT2 (32-bit message).}
\centering
\small
\begin{tabular}{lccccc}
\toprule
\textbf{Metric} & \textbf{JPEG} & \textbf{Noise} & \textbf{Blur} & \textbf{Crop} & \textbf{Color} \\
\midrule
TPR@1\% & 96.8 & 95.3 & 97.5 & 89.8 & 98.0 \\
Bit Acc. & 96.1 & 94.5 & 96.9 & 88.3 & 97.2 \\
\bottomrule
\end{tabular}
\end{table}

\end{document}